\newif\ifshowcomments
    \newcommand{\mynote}[2]{\fbox{\bfseries\sffamily\scriptsize{#1}}{\small$\blacktriangleright$\textsf{#2}$\blacktriangleleft$}}
    \newcommand{\mynote}[2]{}
  \def\\{}%
\begin{document}
\title{The WHY in Business Processes:\\ Unification of Causal Process Models}

%
%
\authoranon{
\author{
Yuval David\orcidlink{0009-0003-0282-7394} \and
Fabiana Fournier\orcidlink{0000-0001-6569-1023} \and
Lior Limonad\orcidlink{0000-0002-4784-2147} \and
Inna Skarbovsky\orcidlink{0000-0002-9398-4373}
}
\authorrunning{Y. David et al.}
%
\institute{IBM Research, Israel\\
\email{\{yuval.david,fabiana,liorli,inna\}@il.ibm.com}}
}
\maketitle
\begin{abstract}
Causal reasoning is essential for business process interventions and improvement, requiring a clear understanding of causal relationships among activity execution times in an event log. Recent work introduced a method for discovering causal process models but lacked the ability to capture alternating causal conditions across multiple variants.
This raises the challenges of handling missing values and expressing the alternating conditions among log splits when blending traces with varying activities.

We propose a novel method to unify multiple causal process variants into a consistent model that preserves the correctness of the original causal models, while explicitly representing their causal-flow alternations. The method is formally defined, proved, evaluated on three open and two proprietary datasets, and released as an open-source implementation.
\end{abstract}

\keywords{Business Processes \and Causal Discovery \and Unification \and Intervention}

\section{Introduction}

``A rooster's crow does not cause the sun to rise, even though it always precedes the sun.''~\cite{Pearl2018Why} But will a rooster’s crowing pattern remain the same if placed at the North Pole? While research by~\cite{Shimmura2013CircadianCrowing} has shown that a rooster’s crow is influenced by its biological circadian cycle and may continue even when placed in a dark room, causal diagrams allow us to predict the effects of interventions without actually conducting an experiment~\cite{Pearl2018Why}. 
Our aim is to perform the same type of analysis in the context of business processes, eliminating the need to intervene directly in the process, which is highly costly.
\textanon{Our developed}{The} causal execution business process (CBP) model~\cite{Fournier2023v3} links activity executions, enabling the assessment of how omitting or modifying an activity’s execution time (e.g., delaying or expediting it) impacts other activities. This eliminates the need for empirical experimentation while informing resource allocation decisions.

Causal inference and causal discovery (CD) are the main pillars of causal analysis. While causal discovery focuses on analyzing and creating models that illustrate the relationships inherent in the data~\cite{Peters2017ElementsAlgorithms,Pearl2011Causality:Edition,Spirtes2001CausationSearch}, causal inference is the process of drawing a conclusion about a causal connection based on the conditions of the occurrence of an effect~\cite{Yao2021AInference,Cunningham2021CausalMixtape,Hernan2020CausalIf}.
More concretely, CD aims at constructing causal graphs from data by exploring hypotheses about the causal structure~\cite{Shimizu2022StatisticalApproach}.
Additional assumptions, such as functional forms and distributions, are often required to identify the causal graph from the data. 
In typical CD settings, the causal graph is assumed to be a Directed Acyclic Graph (DAG), with all common causes of observed variables also being observed (i.e., present in the event log).

The work in~\cite{Fournier2023v3} laid the foundation for discovering CBP models using process execution times. Given multiple CBP models, each capturing a causal perspective of activity execution, unifying them into a single cohesive representation remains a challenge, referred to as the `CBP model unification' problem. Process variability often leads to differing or even conflicting causal conditions. For example, in a loan application process, one variant may follow a sequential flow: after preliminary screening (A), a credit check (B) determines the loan amount (C). In high-demand periods, the process may be expedited, where screening (A) directly determines (C). Symbolically representing the two variants, the first states `A' causes `B', and `B' causes `C', while the second states `A' causes `C' directly.

In~\cite{Fournier2023v3}, a simple union of graph elements (i.e., nodes and edges) was proposed to merge multiple CBP models into a more holistic view. While valid, this approach does not account for the alternating causal execution sequences across variants. A na\"ive union of CBP models would state that `A' causes both `B' and `C', without distinguishing between cases where \texttt{sometimes} `A' causes `B' and \texttt{sometimes} `A' causes `C'. Similarly, it fails to accurately represent the causes of `C'. Even in this simple example, it is clear that the current CBP model lacks the semantics to capture the alternating causal conditions that unfold across different variants.

Another approach could merge the two variants by bundling their traces, where each trace records a sequence of activity executions in the event log. However, this results in an inconsistent data structure, with missing timestamp values across different execution sequences. In the earlier example, traces from the expedited variant would lack a timestamp for activity `B'. This poses a challenge for causal discovery, which struggles with \textit{missing data}~\cite{Conforti2017FilteringLogs}. Traditional causal discovery algorithms typically assume complete data—an assumption often violated in practice~\cite{Mohan2013MissingProblem}. Missing data can bias estimations, distort causal structures, and lead to flawed decision-making~\cite{Little2014StatisticalData}.
We note that this problem is unique to CBP model discovery, due to the inability to run core causal discovery algorithms on event logs containing null values, unlike conventional process mining techniques, which can still generate models such as the Directly Follows Graphs (DFG). This limitation necessitates unification at the model level, rather than through trace-level integration. Consequently, the contribution presented here also enhances the robustness of the CBP model discovery approach by addressing its vulnerability to missing data.



Our work addresses the CBP model unification problem, substantially extending~\cite{Fournier2023v3} by introducing a framework for unifying multiple CBP models, each corresponding to a process variant, into a holistic graph representation. This framework preserves the causal knowledge in each CBP model, explicitly captures the logic of alternating execution conditions, and accounts for missing data.

We introduce a novel method that integrates multiple CBP models into a cohesive causal execution model while preserving the individual causal dependencies in the underlying variants, and articulating the conditions under which different execution paths occur. To achieve this, we augment the causal process model with a gating mechanism that encodes the alternating logic among variants. The method is formalized and assessed. We disclose its computational properties, prove its correctness and empirically evaluate its performance and scalability using five benchmark datasets, including three open and two proprietary ones.

\section{Preliminaries}


This section outlines key concepts essential for understanding the proposed framework. Let $L$ be an input process event log, where $L$ is defined as follows:

\begin{definition}
A \textit{process event log} \( L \) is defined as a finite set of traces, where each trace represents an ordered sequence of activities. Formally, \( L = \{ \tau_1, \tau_2, \dots, \tau_n \} \), where \( \tau_i \) is a trace, and \( n \) is the total number of traces in the log. Each trace \( \tau \) is associated with a unique identifier, \( case_{id} \), and is an ordered sequence of activities, \( \tau = \langle a_1, a_2, \dots, a_m \rangle \), where \( a_j \) is an activity, and \( m \) is the total number of activities in the trace. An activity \( a \) is a tuple \( a = (case_{id}, t, n_{id}, V) \), where \( case_{id} \) is a unique identifier linking the activity to its corresponding trace, \( t \) is the timestamp indicating when the activity occurred (could be start, end, or both), \( n_{id} \) is the unique name of the activity, and \( V \) is a (possibly empty) set of additional `payload' attributes \( \{v_1, v_2, \dots, v_k\} \). Each attribute \( v_k \) is a key-value pair, defined as \( v_k = (k, v) \), where \( k \) is the key representing the attribute name, and \( v \) is the value associated with the key. 
\end{definition}

The timestamps \( t \) in a trace \( \tau \) must be non-decreasing (\( t_1 \leq t_2 \leq \dots \leq t_m \)), and activity names \( n_{id} \) must belong to a predefined set of activity labels \( N \) (\( n_{id} \in N \)). The attributes \( V \) may vary between activities and traces. 

Assume the following event log $L$:
\begingroup
\footnotesize
\begin{align*}
L = 
& (\tau_1=\{A^1, B^3,C^6\}, \tau_2=\{A^2, F^5\}, \tau_3=\{F^4, G^8,H^{12}\}, 
\tau_4=\{A^{10}, F^{15}\},\\ & \tau_5=\{A^{13}, C^{14}, B^{17}\}), 
\text{ where superscript numbers correspond to timestamps.}
\end{align*}
\endgroup


For the given log $L$, it may be further partitioned into a set of variants, where a variant $v$ is defined as follows:

\begin{definition}
A \textit{variant} \( v \) in an event log \( L \) is a subset of traces in \( L \), where all traces in \( v \) share the same ordered sequence of activities. Formally, for an event log \( L = \{\tau_1, \tau_2, \dots, \tau_n\} \), a variant \( v \subseteq L \) is defined as \( v = \{\tau \in L \mid \tau = \langle a_1, a_2, \dots, a_m \rangle \text{ and } \tau' = \langle a_1, a_2, \dots, a_m \rangle, \forall \tau, \tau' \in v\} \), where \( \tau \) and \( \tau' \) are traces, and \( \langle a_1, a_2, \dots, a_m \rangle \) denotes the ordered sequence of activities in the trace. The set of all unique variants in \( L \) is denoted by \( V_L \), where \( V_L = \{v_1, v_2, \dots, v_k\} \), such that \( \bigcup_{i=1}^{k} v_i = L \) and \( v_i \cap v_j = \emptyset \) for \( i \neq j \). That is, each variant \( v \) groups traces from \( L \) that represent identical sequences of activities.
\end{definition}

For the above example log $L$, the variants in the log are the following:
\begingroup
\footnotesize
\begin{align*}
V_L = 
& (v_1=\{A, B, C\}_1, v_2=\{A, F\}_{2,4}, v_3=\{F, G, H\}_3, v_4=\{A, C, B\}_5),\\
& \text{where subscript numbers correspond to case IDs.}
\end{align*}
\endgroup

A less dense partitioning of the log can be derived by combining all variants sharing the same set of activities, independent of activity ordering. Thus, we define a partition $p$ as follows:

\begin{definition}
A \textit{partition} \( p \) in an event log \( L \) is a subset of traces in \( L \), where all traces in \( p \) share the same set of activities, independent of their ordering. Formally, for an event log \( L = \{\tau_1, \tau_2, \dots, \tau_n\} \), a partition \( p \subseteq L \) is defined as \( p = \{\tau \in L \mid \text{set}(\tau) = \text{set}(\tau'), \forall \tau, \tau' \in p\} \), where \( \text{set}(\tau) \) represents the set of activities in the trace \( \tau \) and ordering is disregarded. 
The set of all partitions in \( L \) is denoted by \( P_L \), where \( P_L = \{p_1, p_2, \dots, p_m\} \), such that \( \bigcup_{i=1}^{m} p_i = L \) and \( p_i \cap p_j = \emptyset \) for \( i \neq j \). 
\end{definition}

For the above example log $L$ and variants $V_L$, log partitions are the following:
\begingroup
\footnotesize
\begin{align*}
P_L = 
& (p_1=\{1, 5\}, p_2=\{2, 4\}, p_3=\{3\}),
 \text{ where numbers correspond to case IDs.}
\end{align*}
\endgroup

\textanon{In our work in}{In}~\cite{Fournier2023v3}, an algorithm for causal business process model discovery was presented that identifies all inter-activity relations implying causal execution dependencies between the activities. The core algorithm presented adapts the LiNGAM causal discovery method to timestamped event logs, treating activity execution times as observed variables to infer a directed acyclic graph (DAG) of causal dependencies between activities. Given a process event log as an input, the developed algorithm could be applied to any process variant or a set of variants. The result can be represented as a causal execution (CX) graph $G$ defined as follows:

\begin{definition}
A \textit{causal execution graph} \( G \), resulting from applying a causal discovery algorithm to a process event log \( L \), is a tuple \( G = (V, E) \), where \( V \) is a finite set of nodes, and each node represents an activity name occurring in \( L \). The set of edges \( E \subseteq V \times V \) denotes \textit{causal execution relationships}, where each edge \( (n_i, n_j) \in E \) signifies that the execution of the activity corresponding to \( n_i \) causes the execution of the activity corresponding to \( n_j \). 
\end{definition}

We assume that the name of an activity is unique in the event log (or the activity can be uniquely identified by some other attribute).  The graph \( G \) can be derived from a process variant (a single sequence of traces), a partition (a set of traces with the same set of activities, independent of order), or the entire event log \( L \), and it provides a representation of the causal structure of the process with nodes as activities and directed edges as causal execution dependencies.

Corresponding to the above example partitions, Figure~\ref{fig:causal-graphs-example} depicts possible causal execution graphs corresponding to each partition.

\begin{figure}[ht]
    \centering
    \begin{center}  
    \vspace{-0.5cm}
        \begin{tabular}{c c c} 
            \includegraphics[width=0.2\linewidth]{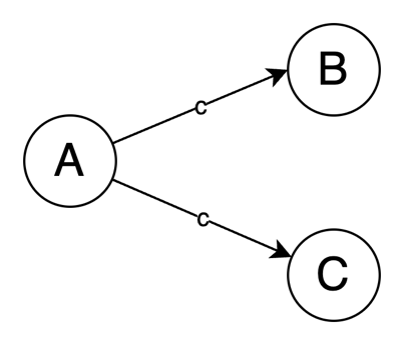} &
            \raisebox{0.8\height}{\includegraphics[width=0.2\linewidth]{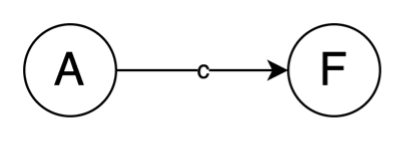}} &
            \includegraphics[width=0.2\linewidth]{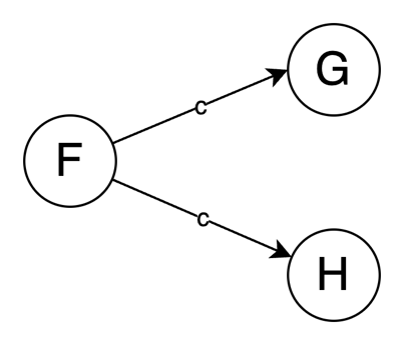} \\
            \parbox{0.3\linewidth}{for $p_1$: $g_1$ = \\ ( N=\{A,B,C\},\\ E=\{(A,B),(A,C)\} )} & 
            \parbox{0.3\linewidth}{for $p_2$: $g_2$ = \\ ( N=\{A,F\},\\ E=\{(A,F)\} )} & 
            \parbox{0.3\linewidth}{for $p_3$: $g_3$ = \\ ( N=\{F,G,H\},\\ E=\{(F,G),(F,H)\} )} \\
        \end{tabular}
    \end{center}
    \vspace{-0.3cm}
    \caption{Three causal execution graphs for the above example log $L$ and partitions $P_L$}
    \vspace{-0.3cm}
    \label{fig:causal-graphs-example}
\end{figure}

\section{Framework}

This section presents the developed method, its underlying approach and assumptions, and concludes with a proof of its correctness.

\subsection{Our Approach}

While the work in~\cite{Fournier2023v3} presented graph unification as a means to combine multiple causal execution graphs that correspond to process variants, it introduced a na\"ive method employing a simple logical union over the input CXs. This yields a causal graph that is semantically ambiguous concerning the alternating causal execution sequences among the variants. To generate a unified causal execution (U-CX) graph it is essential to address two challenges:
\begin{enumerate}
\vspace{-0.1cm}
    \item The need to accommodate causal process discovery employment to missing values in the data resulting from blending traces with different activities.
    \item The need to express the alternating causal execution conditions among the different log splits.
\vspace{-0.1cm}
\end{enumerate}

To tackle the first challenge, we identify a plausible log split that relies on as many observations as possible. This is accomplished by adhering to the assumption that it is possible to combine traces of different variants as long as no known confounder may account for any differences among them. Consequently, we assume:
\newtheorem{assumption}{Assumption}
\begin{assumption}
\vspace{-0.2cm}
    Within each log split, there are no alternating causal execution conditions between any subset of variants that correspond to the same set of activities in the given event log.
\vspace{-0.2cm}
\end{assumption}

Splitting the log into \textit{partitions} allows the application of the CD algorithm developed in~\cite{Fournier2023v3}, yielding a corresponding CX graph for each.
However, in cases where there are some known confounders to induce alternating causal conditions within a partition, we let the user override the default split into more fine-grain partitioning, yielding a split that adheres to the aforementioned assumption. For example, in a loan approval process, the \textit{credit assessment} activity typically determines the \textit{loan amount approved}. Given only these two activities and no additional context, it is reasonable to assume a causal relationship between them. However, an exception may arise in scenarios where the typical adjacency is disrupted — for instance, in organizations where there is a habitual break between 2 p.m. and 4 p.m., the approval activity might consistently occur after the break, regardless of when the assessment finishes. In such cases, the temporal ordering no longer implies causality. To account for these situations, we allow the user to override the default assumption.

Addressing the first challenge enables applying the CD algorithm within each partition but entails adhering to its assumptions, as required by the LiNGAM adaptation~\cite{Fournier2023v3}. These include treating activity execution times as continuous, non-Gaussian, linearly dependent variables with no unobserved confounders.

Given that some degree of noisy measurement is also captured in real-world datasets, some mechanism to mitigate the effect of noise must be embedded to correctly identify the conditions within each partition. For this, we also include a thresholding component in the method to cope with the presence of noise.

To address the second challenge, after identifying splits, we require a formal symbolic representation and an algorithmic method to capture the alternating causal execution conditions across partitions. For this, we extend the basic CX graph with a notation to express logical alternations in causal relations and develop the method detailed in Section~\ref{sec:method}.




Regarding the extension of the symbolic representation, we incorporated four new types of relationships that connect three or more activities (see example in Figure~\ref{fig:four-gateways}). We introduce these new relationship types as follows. A formal definition is available in Appendix~\ref{sec:gateways}.

\begin{itemize}
    \item \textbf{Causal ``And'' Gateway (AND\(_C\))}: This gateway represents a situation where an activity execution always causes multiple other activities. Whenever the causing activity occurs, all connected activities must also occur.

    \item \textbf{Causal ``Or'' Gateway (OR\(_C\))}: This gateway represents a situation where an activity execution can lead to one or more different activities, but not necessarily all of them. At least one of the connected activities must occur, but it is not required that all do.

    \item \textbf{Exhaustive Causal ``Or'' Gateway (OR\(_C^E\))}: This is a special case of the ``Or'' gateway, where the causing activity can trigger any possible combination of the connected activities. That means it could cause just one, several, or even all of them to occur. We introduce this type of gateway for notational clarity, serving as syntactic sugar.

    \item \textbf{Causal ``Xor'' Gateway (XOR\(_C\))}: This gateway represents a situation where an activity execution leads to exactly one of several possible activities, but never more than one. When the initial activity occurs, only one of the connected activities will take place.
\end{itemize}

Mirroring the above (causal-split) gateways by inverting their source and target edges, we also extend the notation with causal-join gateway types: $AND_{C>}$, $OR_{C>}$, $OR_{C>}^E$, and $XOR_{C>}$. Based on these definitions, we thus define:


\begin{definition}
An \textit{Extended Causal Execution (CX) Graph} is a causal execution graph \( G = (V, E) \), where \( V = V_A \cup V_G \) is the set of nodes and \( E \subseteq V \times V \) is the set of edges. The set \( V_A \) represents activity nodes corresponding to activities in the process, and the set \( V_G \) represents gateway nodes, which include \( \text{AND}_C \), \( \text{OR}_C \), \( \text{XOR}_C \), and \( \text{OR}_C^E \), of both split and join gateway types.
\end{definition}
\begingroup
\vspace{-0.6cm}
\setlength{\tabcolsep}{4pt} 
\renewcommand{\arraystretch}{1.2} 
\begin{figure}
    \centering
    \begin{tabular}{c|c}
        \includegraphics[width=0.48\linewidth]{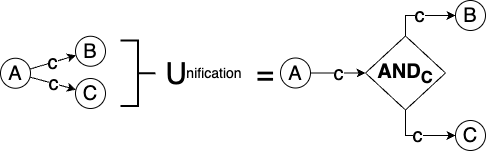} & \includegraphics[width=0.48\linewidth]{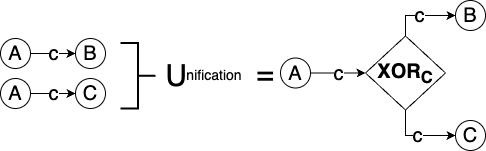}\\
        $AND_C$ gateway unification example & $XOR_C$ gateway unification example\\
        \hline\\
        \includegraphics[width=0.48\linewidth]{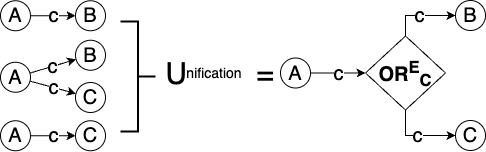} & \includegraphics[width=0.48\linewidth]{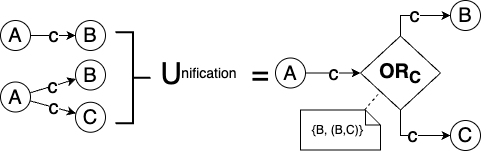}\\
        $OR_C^E$ gateway unification example & $OR_C$ gateway unification example\\
    \end{tabular}
    \caption{Four types of causal split gateways}
    \vspace{-0.4cm}
    \label{fig:four-gateways}
\end{figure}
\endgroup

\subsection{Causal Graph Unification Method}
\label{sec:method}

The method for causal graph unification takes as input an event log \( L \) and a set of variants \( V_s \subseteq V_L \), where \( V_L \) is the set of all variants in \( L \). The output is a U-CX graph \( G_U = (V_U, E_U) \), which results from the combination of the causal execution graphs corresponding to the individual partitions in the log, exclusively preserving the alternative causal relationships across the selected variants. 


Thus, we define:
\begin{definition}
\textit{Causal Graph Unification} is a function \( f_C \) that takes as input an event log \( L \) and a set of variants \( V_s \subseteq V_L \), where \( V_L \) is the set of all variants in \( L \), and produces as output a unified extended (U-CX) form of a causal execution graph \( G_U = (V_U, E_U) \). Formally, \( f_C: (L, V_s) \to G_U \), where \( G_U = (V_U, E_U) \) is the result of combining the set of causal execution graphs \( \{G_v = (V_v, E_v) \mid v \in V_s\} \) corresponding to the individual partitions in \( L \). 
\end{definition}

The unified graph \( G_U \) integrates the causal relationships and employs the multilateral dependencies (\( \text{AND}_C, \text{OR}_C,  \text{OR}_C^E, \text{XOR}_C \)) to construct a representation that maintains coherence concerning the input selected variants.
Such a graph is deemed valid if it adheres to the following requirements: 
\vspace{-0.3cm}
\paragraph{Soundness.} 
An output unified graph is sound if it has no traversal path in it that has no consistent path (i.e., denoting the same causal sequence) in any of the input causal execution graphs corresponding to the log partitions.
\vspace{-0.3cm}
\paragraph{Completeness.} 
An output unified graph is complete if every traversal path in the input causal execution graphs corresponding to the log partitions has a consistent path in the output graph.


The method consists of the following three high-level steps: (1) Log partitioning, (2) Unification, and (3) Simplification, elaborated next. 

\vspace{-0.4cm}
\subsubsection{Log Partitioning}
\label{sec:log-partitioning}

Careful attention should be given to the log partitioning strategy, ensuring alignment with the assumption that a genuine factor (e.g., an unknown confounder) may account for differences between variants. If no such factor exists, breaking down an identified partition into variants can lead to Berkson's paradox~\cite{Pearl2018Why}, potentially introducing spurious causal execution dependencies in the resulting CX graph. 

By default, for any given set of variants in the input, we split the combined set of traces into partitions to ensure the largest possible set of observations is considered for all activities in the given variants. However, we also allow the user to explicitly select a partitioning modality that retains coherence with the original set of variants. In addition, if not explicitly specified in the input, the algorithm will run over the entire event log.

Because of the way we split the data, each partition will not include any NaN values, which allows us to apply the causal process discovery algorithm (CPD) presented in~\cite{Fournier2023v3} to each partition, yielding a corresponding CX graph. We note that it may also be possible for the user to allow the algorithm to combine partition traces with other partitions that contain the same activities, but only when there is no prior knowledge of expected differences in the causal dependencies between the partitions. This can help mitigating the possibility of having causal inconsistencies between a partition and the overall event-log.
Due to space limitations, a formal specification of the partitioning is described in Appendix~\ref{app:log-partitioning}.

\paragraph{A thresholding component for noise elimination.}

In practice, noise or data errors may cause $B$ to occur before $A$, even if this is rare in a partition. To handle this, we use a thresholding mechanism that tolerates minor noise but blocks causal influences opposing the majority trend. This is achieved by blacklisting edges when violations exceed a set threshold.

We define a user-configurable threshold $\theta$, as the maximum proportion of order violations allowed before they become significant. For each activity pair $(A, B)$, we compute the proportion $p_{B \rightarrow A}$ of cases where $B$ precedes $A$.
If $p_{B \rightarrow A} \leq \theta$, we blacklist the edge from $B$ to $A$, filtering minor noise while preserving meaningful temporal constraints. This mechanism ensures the causal discovery algorithm follows dominant temporal patterns without being misled by anomalies. We then apply the CPD algorithm, incorporating thresholding, as $G_i \gets g_i=\text{causalDiscovery}(p_i)$, producing a CX graph for each partition.
\vspace{-0.3cm}
\subsubsection{Unification}

In this step, we aim to unify the CX graphs generated for all partitions. The unification step consists of three main stages:

\begin{enumerate}
    \item Processing the input causal execution graphs into a matrix representation.
    \item Processing the matrix with a unification algorithm.
    \item Forming the output U-CX graph from the eventual matrix.
\vspace{-0.2cm}
\end{enumerate}

The processing of the input CX graphs into a matrix representation considers an input $G_i$, where each graph $g_i = (N_i, E_i) \in G_i$ is defined by $N_i$, the set of activity nodes, and $E_i$, the set of directed edges $(u, v)$ where $u, v \in N_i$. To represent these graphs as a matrix \textbf{for the case of split gateways}, we first compute the union of all activity nodes across the graphs: $N = \bigcup_{i=1}^n N_i$. A matrix $M$ is initialized such that rows correspond to nodes in $N$, columns correspond to the graphs $g_1, g_2, \dots, g_n$, and each cell $M[u][g_i]$ is initialized as an empty set. The matrix is then populated by iterating over the edges of each graph $g_i$: for each edge $(u, v) \in E_i$, $v$ is added to $M[u][g_i]$. That is, all `children' nodes of $u$ are added to the column corresponding to each graph $g_i$. \textbf{For join gateways}, a second matrix having the same structure is created, having each row populated with the family of `parents' nodes. That is, iterating over the edges of each graph $g_i$: for each edge $(u, v) \in E_i$, $u$ is added to $M[v][g_i]$. Without loss of generality, in the remainder of the paper, we refer to the split case, considering the join case as its mirroring equivalent. The algorithm for matrix construction can be found in Appendix~\ref{sec:matrix-construction}.








For example, given an input set $G_i$ including the graphs in Figure~\ref{fig:causal-graphs-example}.
The union of all activity nodes across the graphs is $N = \{a, b, c, f, g, h\}$, and the corresponding matrix representation $M$ is shown in Table~\ref{tab:two-matrices}(a), where each cell contains the set of child activity nodes for the given node and graph.

\begingroup
\footnotesize

\begin{table}
\caption{(a) Matrix representation input. (b) Matrix representation output. }
\label{tab:two-matrices}
    \centering
    \begin{tabular}{cc}

    \begin{minipage}{0.45\textwidth} 
        \centering
        \begin{tabular}{|c|c|c|c|}
            \hline
            \textbf{Node} & $g_1$ & $g_2$ & $g_3$ \\
            \hline
            a & \{b, c\} & \{f\} & $\emptyset$ \\
            f & $\emptyset$ & $\emptyset$ & \{g, h\} \\
            b & $\emptyset$ & $\emptyset$ & $\emptyset$ \\
            c & $\emptyset$ & $\emptyset$ & $\emptyset$ \\
            g & $\emptyset$ & $\emptyset$ & $\emptyset$ \\
            h & $\emptyset$ & $\emptyset$ & $\emptyset$ \\
            \hline
        \end{tabular}
    \end{minipage}

    &
    
    \begin{minipage}{0.45\textwidth} 
        \centering
        \begin{tabular}{|c|c|c|c|}
            \hline
            \textbf{Node} & $g_1$ & $g_2$ & $g_3$ \\ \hline
            \}a\{ & \{(b, c)\} & \{f\} & $\emptyset$ \\
            b & $\emptyset$ & $\emptyset$ & $\emptyset$ \\
            c & $\emptyset$ & $\emptyset$ & $\emptyset$ \\ 
            d & $\emptyset$ & $\emptyset$ & $\emptyset$ \\
            e & $\emptyset$ & $\emptyset$ & $\emptyset$ \\ 
            f & $\emptyset$ & $\emptyset$ & \{(g, h)\} \\ 
            \hline
        \end{tabular}
    \end{minipage}
    \\
    (a)  & (b) \\
    \end{tabular}
\vspace{-0.5cm}
\end{table}

\endgroup


\textbf{For each row in the matrix}, representing a \textit{family of child sets} for a given node across all graphs, apply the \textit{Unification Algorithm}(see Appendix~\ref{sec:unification-alg}) to classify the relationships among the child sets and mark causal gateways as follows:

\begin{enumerate}
    \item \textbf{$\text{AND}_C$ Identification:} For any child set that does not partially intersect with others, promote it to a single element. This is denoted by enclosing the set in round brackets, e.g., $(a,b)$. Once identified, the promotion is propagated across all child sets in the row to ensure consistency.
    \item \textbf{$\text{XOR}_C$ Check:} 
    \begin{itemize}
        \item Invoked only if the row contains \textbf{two or more child sets}.
        \item If all child sets in the row are mutually exclusive (i.e., no intersection between any two sets), the row is classified as $XOR_C$. This is denoted by having the node (row's name) surrounded by ``\}'' and ``\{'', e.g., $\}a\{$.
    \end{itemize}
    \item \textbf{$\text{OR}_C^E$ (Exhaustive $\text{OR}_C$) Check:}
        \begin{itemize}
        \item Invoked only if the row contains \textbf{two or more child sets}.
        \item If the family of child sets is a \textbf{powerset} of the union of its child sets, the row is classified as $\text{OR}_C^E$. This is denoted by having the node (row's name) surrounded by ``['' and ``]'', e.g., $[a]$.
        \end{itemize}
    \item[] \textbf{Otherwise, default to $\text{OR}_C$:}
        \begin{itemize}
            \item If the row is \textbf{not classified} as $\text{OR}_C^E$, it is classified as $\text{OR}_C$. In such a case, the node (row's name) is surrounded by asterisks, e.g., $*a*$.
            \item To ensure the correctness of the result, along with any $OR_C$ gateway that is concluded, we also store its actual set of alternatives (i.e., the Family set) in a designated map, to allow for the visual annotation of each $OR_C$ gateway with an explicit list of its viable alternatives. This is required to ensure the \textit{soundness} of the result.
        \end{itemize}
\end{enumerate}

Once the marking is determined for a row, no further steps are needed for that row. This process is repeated for all rows in the matrix.
Revisiting the above example matrix, and applying the unification algorithm to it concludes with the matrix shown in Table~\ref{tab:two-matrices}(b).

As a last third step, the output U-CX graph is constructed from the annotated matrix (see Figure~\ref{fig:unified-graph-example}). 
A formal specification of the construction is available in Appendix~\ref{sec:matrix-construction}.

\begin{figure}[htb]
    \centering
    \begin{minipage}{0.49\textwidth}
        \footnotesize
        \begin{align*}
            N_U = & \{a, b, c, d, e, f, g, h, \text{AND}_{C1}, \\ 
                 & \text{AND}_{C6}, \text{XOR}_{C1}\}  \\          
            E_U = \{
            & (a, \text{XOR}_{C1}), (\text{XOR}_{C1}, \\
            & \text{AND}_{C1}),(\text{AND}_{C1}, b), (\text{AND}_{C1}, c),\\
            & (\text{XOR}_{C1}, f), (f, \text{AND}_{C6}),\\ 
            & (\text{AND}_{C6}, g), (\text{AND}_{C6}, h)
            \}
        \end{align*}
    \end{minipage}
    \hfill
    \begin{minipage}{0.49\textwidth}
        \includegraphics[width=\linewidth]{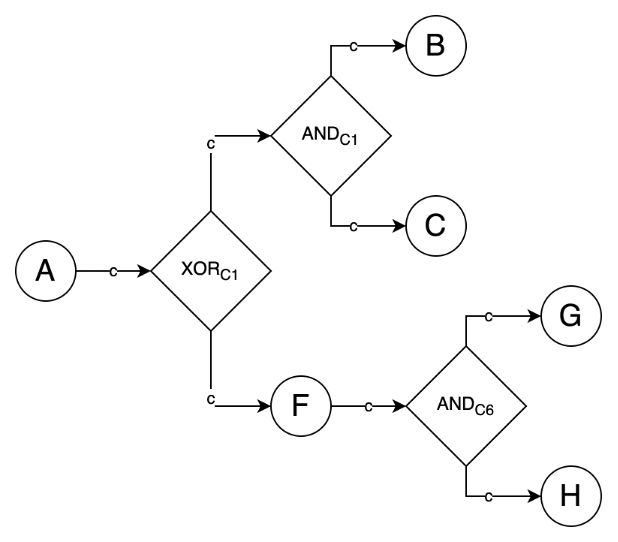} 
    \end{minipage}
    \caption{U-CX graph $G_U$ for the above example log $L$ and graphs: $g_1,g_2,g_3$}
    \vspace{-0.4cm}
    \label{fig:unified-graph-example}
\end{figure}
\subsubsection{Simplification}

This step is optional in the method.
Various heuristics may be employed for further simplification of the non-exhaustive $OR_C$ gateways in the result formed by the unification algorithm.
For example, assume the unification of the following two graphs:
\begingroup
\footnotesize
\[
g_1 = (\{f, a, b\}, \{(f, a), (f, b)\}) \text{ and } g_2 = (\{f, a, c\}, \{(f, a), (f, c)\})
\]
\endgroup

Applying our unification algorithm yields the following:
\begingroup
\footnotesize
\begin{align*}
G_U=(& N_U=\{f,a,b,c,OR_{C1}\},\text{ } 
E_U=\{(f,OR_{C1}),(OR_{C1},a),(OR_{C1},b),(OR_{C1},c)\})\\
& \text{where the } OR_{C1} \text{ alternatives are: } (a,b),(a,c)
\end{align*}
\vspace{-0.5cm}
\endgroup

In the illustrated example, when an OR-type gateway is reached, its set of viable execution alternatives, resulting from the execution of $f$, is restricted exclusively to one of two possibilities, hence being non-exhaustive. Specifically, $f$ either causes the execution of both $a$ and $b$, as implied by $g_1$, or it causes the execution of $a$ and $c$, as implied by $g_2$. Without delving into overly complex formalism, the consequences of executing $f$ can be expressed as: $(a \land b)\oplus(a\land c)$. Such formula is formed as exclusive conjunction of the child nodes of the OR-type gateway.


A variety of logic simplification algorithms can be applied as a final step to simplify such expressions. For example, the same expression can be rewritten as $a \land (b\oplus c)$, preserving logical equivalence. This simplified expression can then be translated back into a graph representation—one that employs a combination of XOR and AND gateways to depict the same causal execution dependencies. However, note that the simplified representation does not necessarily result in a more compact visual graph compared to the original. Therefore, we leave the choice of simplification to the user's discretion.

\vspace{-0.3cm}
\paragraph{Proof of the unification algorithms.} Given a set of input causal execution graphs \( G_i = [G_1, \dots, G_n]\) and a unified, extended, causal execution output graph \( G_U \), a proof of the algorithms' correctness was pursued. This includes the properties of \textit{soundness} and \textit{completeness}. Given a set of input CX graphs and an output U-CX graph, for soundness, we showed that the unification process does not introduce any causal dependencies that are not present in one of the input graphs. This excludes the case of a non-exhaustive $OR_C$ that is sound, if and only if it is annotated with its invocation set. For correctness, we showed that none of the dependencies in the input graphs are lost during the unification process. 

More formally, we prove the following:

\begin{theorem}
(Soundness of the unified model) For each causal execution dependency that the unified model expresses, the same causal execution dependency is expressed by one of the underlying partition models.
\end{theorem}

\begin{theorem}
(Completeness of the unified model) For each causal execution dependency expressed by any of the partition models, the same causal execution dependency is expressed by the unified model.
\end{theorem}

Due to space constraints, the proof of the algorithm's correctness is available in Appendix~\ref{sec:proof}. 
The proof confirms adherence to the requirements in section~\ref{sec:method}.

\subsection{Properties of the unification algorithms}

\label{sec:algorithm-properties}

The processing of the log to derive a U-CX graph involves multiple steps, with a total combined computational complexity of:

\begingroup
\vspace{-0.3cm}
\footnotesize
\[
O(T A + V A \log A + V + T A^3 + P A^2 + A^2 P^2 + A^2),\\
\]
\endgroup

where \( T \) is the number of traces in the log, \( A \) is the number of activities per trace, \( V \) represents the number of variants, and \( P \) is the number of partitions.
Each term corresponds to a step in the pipeline, mostly dominated by a cubic complexity in the maximal number of activities (i.e., $O(T A^3)$) in the employment of the CPD algorithm over the partitions. A detailed complexity breakdown is available in Appendix~\ref{app:algorithm-properties}.

As concluded in the evaluation section~\ref{sec:evaluation}, pragmatically, the computation time is not a major concern as in most realistic processes the number of activities in partitions is typically within a range of a few dozen.

\section{Evaluation}
\label{sec:evaluation}

We implemented and open sourced the method  at~~\textanon{\url{https://github.com/IBM/sax4bpm}\footnote{\url{https://doi.org/10.5281/zenodo.15539153}}}{\url{https://tinyurl.com/3ev7j8re}}.
We tested the scalability and performance of our algorithm against a handful of real datasets, three open and two from industrial applications:
\begin{itemize}
\vspace{-0.1cm}
    \item Road Traffic Fines (RTF): An event log for the process of managing road traffic fines by a local police force in Italy\footnote{\url{https://doi.org/10.4121/uuid:270fd440-1057-4fb9-89a9-b699b47990f5}}.
    \item Sepsis: An event log obtained from a regional hospital in The Netherlands\footnote{\url{https://doi.org/10.4121/uuid:915d2bfb-7e84-49ad-a286-dc35f063a460}}.
    \item BPIC12: BPI Challenge 2012 event log of a loan application process from a Dutch financial institution\footnote{\url{https://doi.org/10.4121/uuid:3926db30-f712-4394-aebc-75976070e91f}}.
    \item Helpdesk: A proprietary event log for a process specifically designed for managing support tickets, tracking issues, and assigning resolutions, issued by an IBM Process Mining (IPM) client company in Italy\footnote{\url{https://www.ibm.com/products/process-mining}}.
    \item CROMA: A proprietary event log for a medical equipment sterilization process captured by Croma Gio.Batta company in Spain\footnote{\url{https://www.cromagiobatta.it/en/home/}}.
\vspace{-0.2cm}
\end{itemize}

For each dataset, we measured partition-wise computation times and the total runtime for computing U-CXs across all partitions, as shown in Table~\ref{tab:benchmark-results}. We also assessed performance consistency with the algorithm properties outlined in section~\ref{sec:algorithm-properties}.

\begin{table}[htb]
\centering
\vspace{-0.3cm}
\caption{Benchmark results with three open datasets and two obtained from industry.}
\label{tab:benchmark-results}
\resizebox{\columnwidth}{!}{%
\begin{tabular}{llllllllllll}
\toprule
\textbf{Dataset} &
  \makecell{\textbf{\#} \\ \textbf{Traces}} &
  \makecell{\textbf{\#} \\ \textbf{Events}} &
  \makecell{\textbf{\#} \\ \textbf{Variants}} &
  \makecell{\textbf{Avg} \\ \textbf{event} \\ \textbf{/ trace}} &
  \makecell{\textbf{Min} \\ \textbf{event} \\ \textbf{/ trace}} &
  \makecell{\textbf{Max} \\ \textbf{event} \\ \textbf{/ trace}} &
  \makecell{\textbf{\#} \\ \textbf{Partitions}} &
  \makecell{\textbf{Avg time} \\ \textbf{/ partition} \\ {[}sec{]}} &
  \makecell{\textbf{Max time} \\ \textbf{/ partition} \\ {[}sec{]}} &
  \makecell{\textbf{Min time} \\ \textbf{/ partition} \\ {[}sec{]}} &
  \makecell{\textbf{Total} \\ \textbf{runtime} \\ {[}sec{]}} \\ 
\midrule
Road Traffic Fines & 150370 & 561470 & 231  & 4  & 2 & 20  & 35  & 0.121 & 0.147 & 0.003 & 22.04  \\
Sepsis             & 1050   & 15214  & 846  & 14 & 3 & 185 & 16  & 0.022 & 0.032 & 0.005 & 10.16  \\
BPIC12           & 13087  & 164506 & 4336 & 12 & 3 & 96  & 103 & 0.022 & 0.080 & 0.004 & 118.89 \\ 
\midrule
HELPDESK                & 3664   & 31581  & 972  & 8  & 1 & 47  & 51  & 0.017 & 0.028 & 0.001 & 14.13  \\
CROMA              & 595    & 7041   & 4    & 11 & 1 & 12  & 2   & 0.034 & 0.043 & 0.024 & 0.26  \\
\bottomrule
\end{tabular}%
}
\vspace{-0.6cm}
\end{table}

Figure \ref{fig:scatter-plots-comp-time} presents scatter plots for the BPIC12 and Helpdesk datasets, depicting the relationship between the number of activities and computation time across partitions. A cubic polynomial curve fits well, with $R^2=.9781$ for BPIC12 and $R^2=.9209$ for Helpdesk. Linear trend lines show similar fit levels ($R^2=.9708$ and $R^2=.9123$, respectively). Thus, scalability remains manageable in most practical cases, even with high trace counts, as seen in the Road Traffic Fines dataset. Overall, results align with expected performance properties.

\begingroup
\setlength{\tabcolsep}{4pt} 
\renewcommand{\arraystretch}{1.2} 
\begin{figure}[ht]
    \centering
    \begin{tabular}{c|c}
        \includegraphics[width=0.48\linewidth]{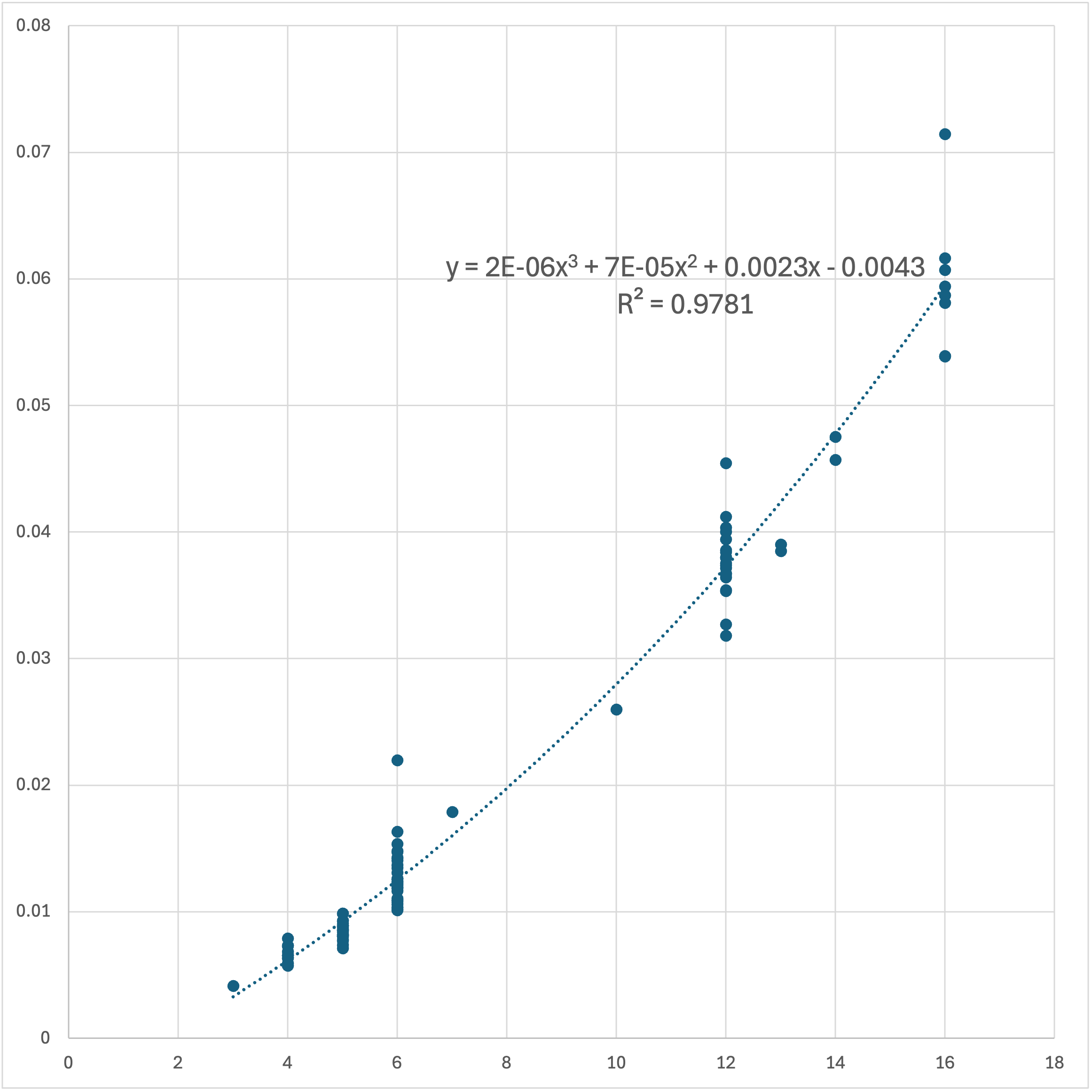} & \includegraphics[width=0.48\linewidth]{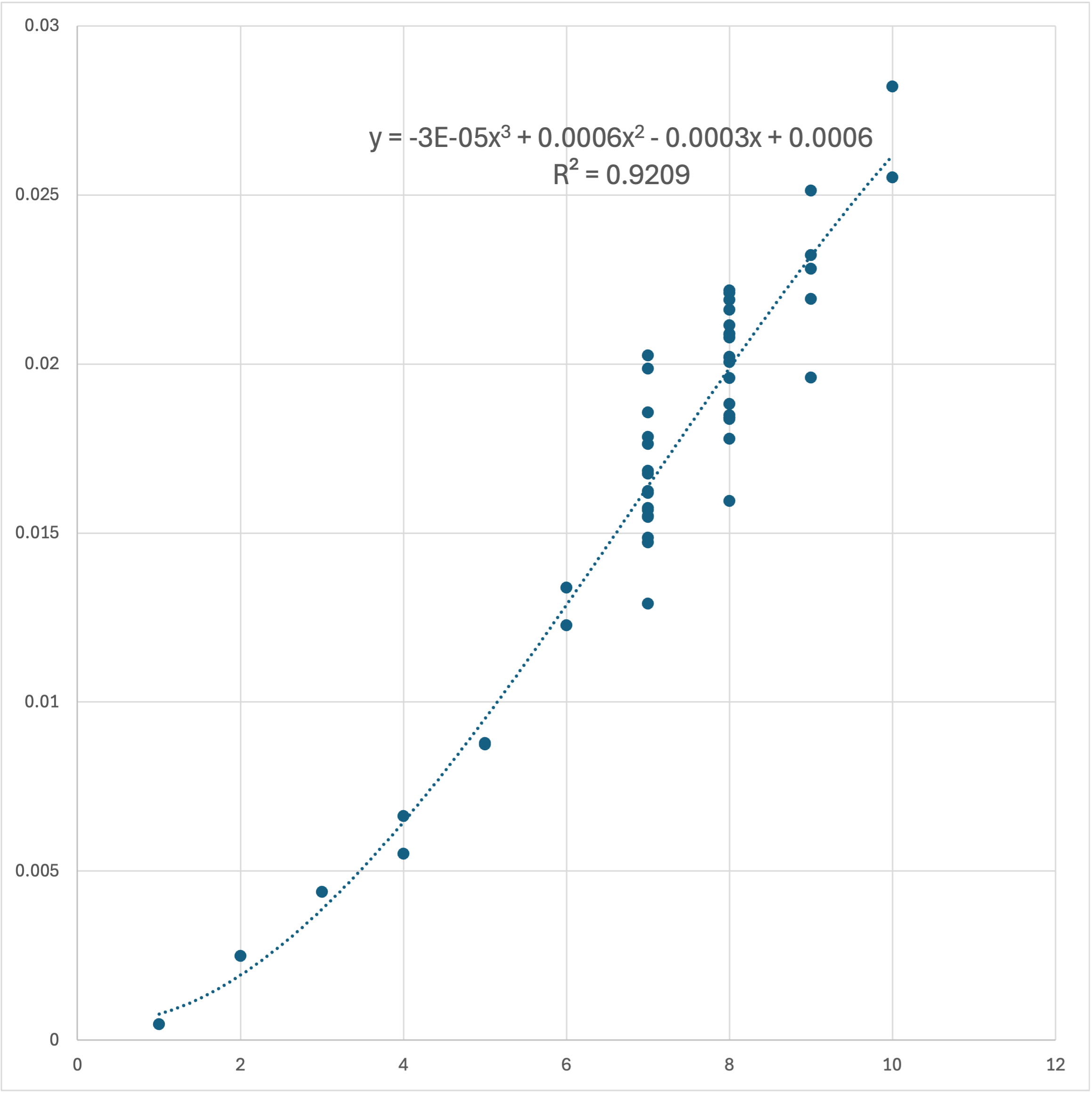}\\
        BPIC12 dataset & Helpdesk dataset
    \end{tabular}
    \vspace{-0.3cm}
    \caption{Partition compute times.}
    \vspace{-0.5cm}
    \label{fig:scatter-plots-comp-time}
\end{figure}
\endgroup

Figure~\ref{fig:bpi-2012-example} shows one example of two process variants from the BPIC12 event log, demonstrating the formation of a non-exhaustive $OR_C$ gateway and two $AND_C$ gateways.
Note that the explicit invocation set associated with the newly formed gateway ``or\_0'' is disclosed in the caption for the unified result.

\begingroup
\setlength{\tabcolsep}{2pt} 
\renewcommand{\arraystretch}{1} 
\begin{figure}[ht]
    \centering
    \begin{tabular}{c|c|c}
        \raisebox{0.51\height}
        {\includegraphics[width=0.27\linewidth]{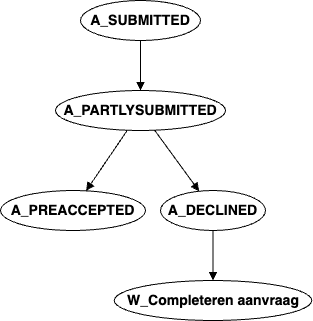}} & 
        \raisebox{0.49\height}
        {\includegraphics[width=0.36\linewidth]{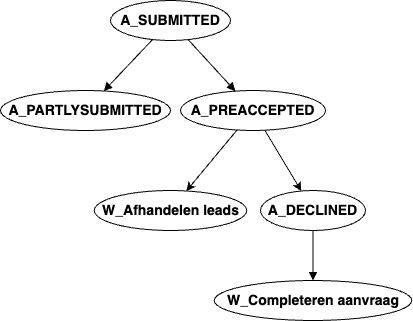}} & 
        \includegraphics[width=0.3\linewidth]{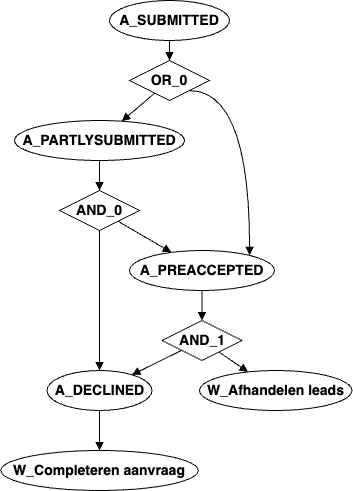}\\
        Variant 1: & Variant 2: & Unified result:\\
        \scriptsize
        \makecell{
        A\_SUBMITTED$\blacktriangleright$\\
        A\_PARTLYSUBMITTED$\blacktriangleright$\\
        A\_PREACCEPTED$\blacktriangleright$\\
        A\_DECLINED$\blacktriangleright$\\
        W\_Completeren aanvraag} &
        \scriptsize
        \makecell{
        A\_SUBMITTED$\blacktriangleright$\\
        A\_PARTLYSUBMITTED$\blacktriangleright$\\
        A\_PREACCEPTED$\blacktriangleright$\\
        W\_Afhandelen leads$\blacktriangleright$\\
        A\_DECLINED$\blacktriangleright$\\
        W\_Completeren aanvraag} & 
        \scriptsize
        \makecell{or\_0 denoting a\\ non-exhaustive gateway:\\
        \{(A\_PARTLYSUBMITTED,\\A\_PREACCEPTED),\\
        A\_PARTLYSUBMITTED
        \}
        }
    \end{tabular}
    \vspace{-0.1cm}
    \caption{BPIC12: unification of two example variants}
    \vspace{-0.5cm}
    \label{fig:bpi-2012-example}
\end{figure}
\endgroup

Figure~\ref{fig:ipm-screenshot} previews a new process mining feature under development in \textanon{the IBM}{a} process mining product, leveraging the developed model for prescriptive process analytics. This dashboard analyzes bottlenecks related to key performance indicators (KPIs) like process lead time, presenting historical KPI values alongside insights into activities contributing to delays while filtering out irrelevant ones. The U-CX graph enhances XAI by identifying critical activity execution times affecting the target KPI~\cite{Galanti2023AnAnalytics}, ranking activities along the causal chain, and excluding those with no impact.

\begin{figure}[ht]
    \centering
    \includegraphics[width=1\linewidth]{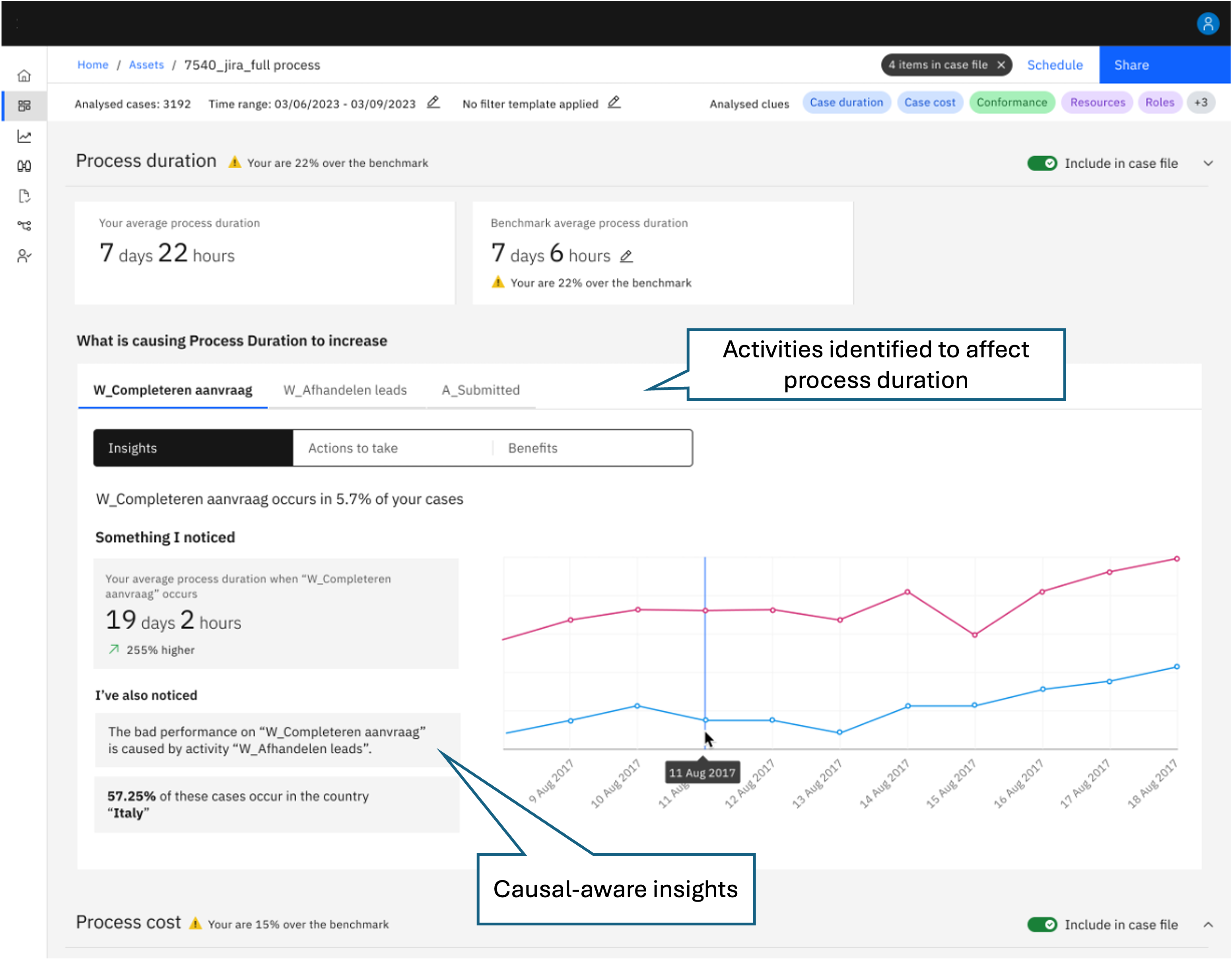}
    \vspace{-0.6cm}
    \caption{Screenshot of a prescriptive process analytics dashboard.}
    \vspace{-0.5cm}
    \label{fig:ipm-screenshot}
\end{figure}

With regards to the simplification step in our method, we attempted integration with an available solver\footnote{\url{https://github.com/schuyler/boolgen?tab=readme-ov-file}}, an implementation of the Quine–McCluskey algorithm (QMC)~\cite{Quine1955AFunctions}. As noted above, this realization helped in further compacting the logical expression. However, in certain cases, it also yielded negation and ``1''s as part of the concluded formulae, which is not expressible with our current graph notation. 

\section{Related Work}

Our work lies at the intersection of process discovery (PD) and causal discovery (CD). Addressing the multi-perspective paradigm in AI-Augmented BPMs~\cite{Dumas2023}, the model in~\cite{Fournier2023v3} is the first to systematically reveal the primary causal process structure from execution timestamps. Other causal perspectives, such as quality and cost, naturally derive from it~\cite{MichaelS.Dobson2004TheManagement}. This work further enhances the original model by incorporating ``causal-flow'' logic, ensuring it fully captures the altering causal conditions across process variants.

Most PD approaches apply some threshold on time precedence, counting `directly follows' or `eventually follows' relationships. ``Causal'' relations are sometimes used to express the frequency of these dependencies, as in~\cite{Kourani2023MiningGraphs}, which reduces representational bias in the hybrid miner algorithm by leveraging causal graph metrics for long-term dependencies. Instead, we take a bidirectional, asymmetric approach to analyzing timestamp relationships between activities.

Causal discovery aims to uncover causal relationships from observational data, distinguishing cause-effect directionality from mere correlation~\cite{Spirtes2001CausationSearch,Pearl2011Causality:Edition}. 
Methods divide into assessing intervention impact and identifying qualitative causal links. The Conditional Average Treatment Effect (CATE) technique gives a measurement to assess the magnitude of an intervention. It is used in \cite{Bozorgi2020} for decision-making and in~\cite{ShoushMahmoud2022WhenConstraints} for prescriptive process monitoring. While these studies assume a given causal model, we focus on discovering causal models.

In~\cite{Hompes2017DiscoveringVariation}, a graph of causal factors is generated using Granger causality, a statistical test for time-series analysis. However, relations are based on timestamped KPI values, whereas we analyze activity timestamps.

Prior work has explored causal relationships among decision points~\cite{Alaee2024Data-DrivenAndDiscovery,Tanmayee2019,Leemans2022CausalModels}. Our method, however, uncovers how decisions, represented by causal gateways, influence subsequent activity executions.
Causal discovery algorithms face challenges like handling latent variables, feedback loops, and missing data. Methods such as multiple imputation \cite{Rubin1987MultipleSurveys}, expectation-maximization (EM) \cite{Dempster1977Algorithm}, and inverse probability weighting \cite{Seaman2013ReviewData} address missing data but have limitations in process mining. Multiple imputations risk bias if misspecified, and EM algorithms are computationally demanding for large datasets~\cite{Mohan2013MissingProblem}. These approaches also rely on strong assumptions about data distribution and missingness. Instead, by splitting data into subsets, we avoid the need for imputation or complex modeling. The gating mechanism integrates individual causal graphs, ensuring the unified model captures the causal relationships across subsets.

Causal discovery also employs some modeling language for result articulation. C-Nets~\cite{vanderAalst2016ProcessMining,vanderAalst2011CausalDiscovery}, a rich process notation compared to others (e.g., Petri-nets, BPMN, and EPC), could serve as a viable alternative for U-CX graphs. By using input/output bindings, C-Nets capture diverse execution sequences. However, as a notation, they lack a mechanism to infer causal structures, requiring expert input or a unification method like the one proposed in our work.

\section{Conclusions and future work}


We introduce a method for integrating multiple causal graphs into a unified model, preserving individual causal dependencies while explicitly capturing alternating execution conditions via a gating mechanism. Unlike conventional \textit{always causes} approaches, our method adopts a \textit{sometimes causes} perspective, adapting arrow notation accordingly and extending it with graphical symbols for temporal alternations. We formalized and implemented the method, analyzed its complexity, proved its correctness, and evaluated its scalability using five benchmark datasets—three open and two proprietary. 


The method offers key advantages. It scales well for large process mining datasets by operating on subsets. While extending a prior causal process discovery technique, it remains agnostic to the specific causal discovery algorithm used. By handling missing data through splitting and unification, our method mitigates the risk of introducing bias due to incorrect assumptions about missing values.

Our diamond-based gateway notation enhances visualization in a BPMN-like manner, distinguishing logical alternations (AND, OR, XOR). However, it does not explicitly convey edge coefficient values derived from the causal discovery algorithm. Users seeking these values must refer to the input graphs or rely on annotations in the extended output. While numeric figures can be added next to alternatives (e.g., the $OR_C$ gateway in Figure~\ref{fig:four-gateways}), such annotations can become cumbersome as the number of alternatives per gateway increases.


Our model could serve as a semantic foundation for C-Nets, where each node has inbound and outbound sets represented by dots connected by arcs. Extending this notation with numeric indicators for coefficient values could enhance clarity but may also lead to dense visualizations with multiple arcs, affecting usability. Furthermore, incorporating additional literals from the simplification step (e.g., negation, ``1''s) remains an open question. Future work could explore leveraging large language models for simplification using few-shot learning and user-rated outputs to balance compactness and completeness.




\section*{Acknowledgments}
\textanon{This project has received funding from the European Union’s Horizon research and innovation programme under grant agreements no 101094905 (AI4GOV), 101092021 (AutoTwin), and 101092639 (FAME). We thank Croma Gio.Batta and IBM Process Mining for sharing their datasets.}{}

\bibliographystyle{splncs04-no-url-doi}
\bibliography{referencesm} 

\newpage
\appendix
\section*{Appendices}
\section{Causal Gateways Specification}
\label{sec:gateways}
\begin{definition}
A \textit{Causal "And" Gateway}, denoted as \( \text{AND}_C \), is a node in a causal execution graph \( G = (V, E) \), where \( \text{AND}_C \in V \), that links a source edge \( (s, \text{AND}_C) \in E \) and a set of two or more target edges \( \{(\text{AND}_C, t) \mid t \in T_e, T_e \subseteq V, |T_e| \geq 2 \} \subseteq E \). The gateway \( \text{AND}_C \) denotes that the node \( s \in V \) linked via the source edge \( (s, \text{AND}_C) \) causes the execution of all nodes in \( T_e \) linked via the set of target edges.
\end{definition}

\begin{definition}
A \textit{Causal "Or" Gateway}, denoted as \( \text{OR}_C \), is a node in a causal execution graph \( G = (V, E) \), where \( \text{OR}_C \in V \), that links a source edge \( (s, \text{OR}_C) \in E \) and a set of two or more target edges \( \{(\text{OR}_C, t) \mid t \in T_e, T_e \subseteq V, |T_e| \geq 2 \} \subseteq E \). The gateway \( \text{OR}_C \) denotes that the node \( s \in V \) linked via the source edge \( (s, \text{OR}_C) \) causes the execution of some (at least one) of the nodes in \( T_e \) linked via the set of target edges.
\end{definition}

Concerning the general type of the "Or" gateways, we also define a specific type in which the source activity can be a cause for any possible combination among its target activities as follows:

\begin{definition}
A \textit{Exhaustive Causal "Or" Gateway}, denoted as \( \text{OR}_C^E \), is a special type of causal "Or" gateway in a causal execution graph \( G = (V, E) \), where \( \text{OR}_C^E \in V \), that links a source edge \( (s, \text{OR}_C^E) \in E \) and a set of two or more target edges \( \{(\text{OR}_C^E, t) \mid t \in T_e, T_e \subseteq V, |T_e| \geq 2 \} \subseteq E \). The gateway \( \text{OR}_C^E \) denotes that the node \( s \in V \) linked via the source edge \( (s, \text{OR}_C) \) causes any combination of the nodes in \( T_e \) linked via the set of target edges, including one or multiple target nodes.
\end{definition}

\begin{definition}
A \textit{Causal "Xor" Gateway}, denoted as \( \text{XOR}_C \), is a node in a causal execution graph \( G = (V, E) \), where \( \text{XOR}_C \in V \), that links a source edge \( (s, \text{XOR}_C) \in E \) and a set of two or more target edges \( \{(\text{XOR}_C, t) \mid t \in T_e, T_e \subseteq V, |T_e| \geq 2 \} \subseteq E \). The gateway \( \text{XOR}_C \) denotes that the node \( s \in V \) linked via the source edge \( (s, \text{XOR}_C) \) causes the execution of exactly one node in \( T_e \) linked via the set of target edges.
\end{definition}
\section{Log Partitioning Algorithm}
\label{app:log-partitioning}

\begin{algorithm}[!ht]
\footnotesize
\caption{Log Partitioning}
\textbf{Log Partitioning Function:} \\
\textbf{Input:}
\begin{itemize}
\vspace{-0.2cm}
    \item Event log \( L \) (a set of traces)
    \item \textit{Optional:} Selection of variants \( V_s \), each identifying a subset of traces in \( L \); default: \( V_s = \emptyset \)
    \item \textit{Optional:} Boolean flag \texttt{split\_by\_variants}, default: \texttt{false}
\vspace{-0.2cm}
\end{itemize}

\textbf{Output:} A set of causal execution graphs \( G_i \); initialized as \( G_i = \emptyset \)

\begin{algorithmic}[1]
\If{\( V_s = \emptyset \)}
    \State \( V_s \gets \text{PM.getAllVariants}(L) \) \Comment{Retrieve all variants using a process mining algorithm}
\EndIf
\State \( T_s \gets \bigcup V_s \) \Comment{Assign \( T_s \) as the union of all traces in the variants \( V_s \)}
\If{\texttt{split\_by\_variants} = \texttt{true}}
    \State \( P_s \gets V_s \) \Comment{Directly assign \( V_s \) to \( P_s \) if splitting by variants}
\Else
    \State \( P_s \gets \text{SplitFunction}(T_s) \) \Comment{Invoke the \texttt{SplitFunction} on \( T_s \) to get partitions}
\EndIf
\For{\textbf{each} partition \( p_i \in P_s \)}
    \State \( g_i \gets \text{CBP.discover}(p_i) \) \Comment{Derive the causal execution graph for \( p \)}
    \State \( G_i \gets G_i \cup \{g_i\} \) \Comment{Add the element \( g_i \) to the set \( G_i \)}
\EndFor
\State \textbf{return} \( G_i \)
\end{algorithmic}
\vspace{0.2cm}
\textbf{Split Function:} \\
\textbf{Input:} A set of traces \( T \) \\
\textbf{Output:} A set of partitions \( P \), where each partition contains traces from \( T \) that share the same set of activities (regardless of order).

\begin{algorithmic}[1]
\State Initialize \( P \gets \emptyset \) \Comment{The set of partitions}
\State Initialize a mapping \( M \gets \emptyset \) \Comment{Maps sets of activities to partitions}
\For{\textbf{each} trace \( t \in T \)}
    \State \( A_t \gets \text{Set}(t) \) \Comment{Compute the set of activities in \( t \)}
    \If{\( A_t \notin M \)}
        \State \( M[A_t] \gets \emptyset \) \Comment{Create a new partition for this set of activities}
    \EndIf
    \State \( M[A_t] \gets M[A_t] \cup \{t\} \) \Comment{Add the trace \( t \) to the appropriate partition}
\EndFor
\State \( P \gets \{ M[k] \mid k \in M \} \) \Comment{Collect all partitions from the mapping \( M \)}
\State \textbf{return} \( P \)
\end{algorithmic}
\label{alg:log-partitioning}
\end{algorithm}
\section{Input Graphs to Matrix Representation Algorithm}
\label{sec:matrix-construction}

\begin{algorithm}[!ht]
\footnotesize
\caption{Processing Input Graphs into a Matrix Representation}
\label{alg:process_graphs}
\begin{algorithmic}[1]
\Require Set of graphs $\mathcal{G_i} = \{g_1, g_2, \dots, g_n\}$, where $g_i = (N_i, E_i)$
\Ensure Matrix $M[node][graph]$ representing child sets

\State Compute the union of nodes: $N \gets \bigcup_{i=1}^n N_i$

\State Initialize the matrix $M$:
\ForAll{node $u \in N$ and graph $g_i \in \mathcal{G_i}$}
    \State $M[u][g_i] \gets \emptyset$
\EndFor

\State Populate the matrix:
\ForAll{graph $g_i = (N_i, E_i) \in \mathcal{G_i}$}
    \ForAll{edge $(u, v) \in E_i$}
        \State $M[u][g_i] \gets M[u][g_i] \cup \{v\}$
    \EndFor
\EndFor

\State \Return $M$
\end{algorithmic}
\end{algorithm}
\section{Unification Algorithm}
\label{sec:unification-alg}
\begin{algorithm}[!ht]
\footnotesize
\caption{Processing the Matrix with the Unification Algorithm}
\begin{algorithmic}[1]
\Require Matrix $M[node][graph]$, where columns correspond to graphs \(g_1, g_2, \dots, g_n\)
\Ensure Revised matrix $M'$ with gateway markings and viable alternatives for all $OR_C$ gateways in $M_{OR}$ 

\State $M' \gets M$

\State \( M_{OR} \gets \emptyset \) \Comment{Initialize map sets to capture alternatives executions for $OR_C$}

\For{each node $u \in M$}
    \State $\text{Family} \gets \{M[u][g] \mid g \in \{g_1, g_2, \dots, g_n\}, M[u][g] \neq \emptyset\}$

    \Comment{Step 1: $AND_C$ Identification}
    \For{each child set $S_i \in \text{Family}$}
        \State $\text{IntersectsPartially} \gets \text{False}$
        \For{each child set $S_j \in \text{Family}$, $j \neq i$}
            \If{$S_i \cap S_j \neq \emptyset$ \textbf{and} $S_i - S_j \neq \emptyset$} \Comment{Checking no partial intersection}
                \State $\text{IntersectsPartially} \gets \text{True}$
                \State \textbf{break}
            \EndIf
        \EndFor
        \If{\textbf{not} $\text{IntersectsPartially}$}
            \For{each occurrence of $S_i$ in $\text{Family}$}
                \State $S_i \gets "(" + S_i + ")"$ \Comment{Mark child set as $AND_C$}
            \EndFor
        \EndIf
    \EndFor

    \Comment{Step 2: $XOR_C$ Check (only if row has 2+ sets)}
    \If{$|\text{Family}| \geq 2$ \textbf{and} $\forall S_i, S_j \in \text{Family}, i \neq j, S_i \cap S_j = \emptyset$} \Comment{Checking family is an exclusive set}
        \State $u \gets "\}" + u + "\{"$ \Comment{Mark node as $XOR_C$}
    \Else
        \Comment{Step 3: $OR_C^E$ Check (only if row has 2+ sets)}
        \If{$|\text{Family}| \geq 2$}
            \State $U \gets \bigcup_{S \in \text{Family}} S$ \Comment{Compute the union of all sets in Family}
            \If{$|\text{Family}| = 2^{|U|}$} \Comment{Check if Family is a powerset of \(U\)}
                \State $u \gets "[" + u + "]"$ \Comment{Mark node as $OR_C^E$}
            \Else
                \State $u \gets "*" + u + "*"$ \Comment{Default to $OR_C$}
                \State \( M_{OR}[u] \gets \emptyset \) \Comment{Create a new map entry for this family set}
                \State \( M_{OR}[u] \gets M_{OR}[u] \cup \text{Family} \) \Comment{Add the Family set to the entry}
            \EndIf
        \EndIf
    \EndIf

\EndFor

\State \Return $M'$, $M_{OR}$

\end{algorithmic}
\label{alg:unification-algorithm}
\end{algorithm}
\section{Unified Graph Construction Algorithm}
\label{sec:matrix-to-graph}
\vspace{-0.5cm}
\begin{algorithm}[!ht]
\footnotesize
\caption{Construct Unified Graph from Revised Matrix}
\begin{algorithmic}[1]
\Require Revised matrix \(M'\), where each row represents a node and columns represent its child sets
\Ensure Unified graph \(G_u = (N, E)\), with \(N\) as nodes and \(E\) as edges

\State Initialize \(N \gets \emptyset\) and \(E \gets \emptyset\) \Comment{Create an empty graph}

\For{each row \(u \in M'\) with row number \(r\)} \Comment{Iterate through nodes in the revised matrix}
    \State \(N \gets N \cup \{u\}\) \Comment{Add node \(u\) to \(N\)}

    \For{each child set \(S \in M'[u]\)}
        \If{\(S = \emptyset\)} \Comment{Skip empty sets}
            \State \textbf{continue}
        \EndIf
        \If{\(|S| = 1\)} \Comment{\(S\) contains a single element}
            \State \(N \gets N \cup \{v\}\) \Comment{Add node \(v\) to \(N\)}
            \State \(E \gets E \cup \{(u, v)\}\) \Comment{Add edge from \(u\) to \(v\)}
        \ElsIf{\(S = \{(v_1, v_2, \dots, v_k)\}\) \textbf{and} \(label(S) = "(...)"\)} 
        \State \Comment{\(S\) contains multiple elements and is surrounded by round brackets}
            \State \(N \gets N \cup \{\text{AND}_{Cr}\}\) \Comment{Add \(\text{AND}_{Cr}\) to \(N\)}
            \State \(E \gets E \cup \{(u, \text{AND}_{Cr})\}\) \Comment{Add edge from \(u\) to \(\text{AND}_{Cr}\)}
            \For{each \(v_j \in \{v_1, v_2, \dots, v_k\}\)}
                \State \(E \gets E \cup \{(\text{AND}_{Cr}, v_j)\}\) \Comment{Add edge from \(\text{AND}_{Cr}\) to \(v_j\)}
            \EndFor
        \EndIf
    \EndFor

    \If{\(label(u) = "\}...\{"\)} \Comment{Handle $XOR_C$}
        \State \(N \gets N \cup \{\text{XOR}_{Cr}\}\) \Comment{Add \(\text{XOR}_{Cr}\) to \(N\)}
        \For{each edge \((u, v) \in E \text{ where } v \text{ is a child of } u\)}
            \State \(E \gets E \setminus \{(u, v)\}\) \Comment{Remove edge \((u, v)\)}
            \State \(E \gets E \cup \{(u, \text{XOR}_{Cr}), (\text{XOR}_{Cr}, v)\}\) \Comment{Redirect edges via \(\text{XOR}_{Cr}\)}
        \EndFor
    \ElsIf{\(label(u) = "[...]"\)} \Comment{Handle $OR_C^E$}
        \State \(N \gets N \cup \{\text{OR}_{Cr}^E\}\) \Comment{Add \(\text{OR}_{Cr}^E\) to \(N\)}
        \For{each edge \((u, v) \in E \text{ where } v \text{ is a child of } u\)}
            \State \(E \gets E \setminus \{(u, v)\}\) \Comment{Remove edge \((u, v)\)}
            \State \(E \gets E \cup \{(u, \text{OR}_{Cr}^E), (\text{OR}_{Cr}^E, v)\}\) \Comment{Redirect edges via \(\text{OR}_{Cr}^E\)}
        \EndFor
    \ElsIf{\(label(u) = "*...*"\)} \Comment{Handle $OR_C$}
        \State \(N \gets N \cup \{\text{OR}_{Cr}\}\) \Comment{Add \(\text{OR}_{Cr}\) to \(N\)}
        \For{each edge \((u, v) \in E \text{ where } v \text{ is a child of } u\)}
            \State \(E \gets E \setminus \{(u, v)\}\) \Comment{Remove edge \((u, v)\)}
            \State \(E \gets E \cup \{(u, \text{OR}_{Cr}), (\text{OR}_{Cr}, v)\}\) \Comment{Redirect edges via \(\text{OR}_{Cr}\)}
        \EndFor
    \EndIf
\EndFor

\State \Return \(G_u = (N, E)\)
\end{algorithmic}
\label{alg:graph-reconstruction}
\end{algorithm}
\section{Properties of the Algorithms}
\label{app:algorithm-properties}

The processing of the log to derive a U-CX graph consists of several steps. The first step, \textbf{variant identification}, involves splitting the log into variants by grouping traces that follow the same sequence of activities. This is accomplished using a hashing approach that stores string representations of activity sequences as hash keys, resulting in an overall computational complexity of \( O(TA) \), where \( T \) is the number of traces in the log and \( A \) is the number of activities per trace.

Following the identification of variants, the process proceeds to \textbf{variant ordering}, which ensures that variants with the same set of activities, irrespective of order, are grouped together. This requires sorting the activity sequences of the variants, leading to a complexity of \( O(VA \log A) \), where \( V \) represents the number of unique variants.

Once ordered, the next step is \textbf{partition identification}, where variants with identical sets of activities are combined into partitions. With pre-sorted variants, this operation can be completed in linear time with a complexity of \( O(V) \).

After partitioning, the process moves to \textbf{causal discovery}, where the LiNGAM algorithm is applied to the traces within each partition to infer causal dependencies among activities. Given the cubic complexity of LiNGAM with respect to the number of activities, the overall complexity for this step is \( O(T A^3) \).

The resulting causal graphs are then transformed into a \textbf{matrix representation} in the graph transformation step. Each row in this matrix corresponds to a parent activity node in the causal graphs, and columns contain sets of child activities that are causally related to the parent. The complexity of this step is \( O(P A^2) \), where \( P \) is the number of partitions.

Next, the matrix undergoes processing in the \textbf{matrix unification} step to identify the causal gateways. This step involves scanning the matrix to detect partial intersections and exclusivity conditions. Checking for partial intersections and exclusivity, which dominate the computational cost, results in a complexity of \( O(A^2 P^2) \), while the complexity of checking for the exhaustive OR involves a simple count of the child sets and size comparison \( O(1) \) using the formula:
\begingroup
\footnotesize
\[
|S| = 2^{|\cup S|}
\]
\endgroup

where $S$ is the union set of activities in all child sets. Finally, the marked matrix is transformed back into a U-CX graph in the \textbf{graph reconstruction} step. This involves processing each row of the matrix to reconstruct nodes and edges, yielding a complexity of \( O(A^2) \).

Combining the complexities of all steps, the total computational complexity of the entire process is:

\begingroup
\footnotesize
\[
O(T A + V A \log A + V + T A^3 + P A^2 + A^2 P^2 + A^2),\\
\]
\endgroup

where each term corresponds to a step in the pipeline, mostly dominated by a cubic complexity in the maximal number of activities (i.e., $O(T A^3)$) in the employment of LiNGAM over the partitions.
\section{Proof}
\label{sec:proof}
In this section, we present a proof for the soundness and completeness properties of our unification algorithm. Given a set of input causal execution graphs \( G_i = [G_1, \dots, G_n]\) and a unified, extended, causal execution output graph \( G_U \), we define soundness and completeness as follows:

\paragraph{Soundness:}
A unified causal execution graph \( G_U \) is \textbf{sound} if every causal dependency that can be derived in \( G_U \) must have originated from at least one of the input graphs in \( G_i \).  

In other words, soundness ensures that the unification process does \textbf{not introduce any new causal dependencies} that were not already present in one of the input graphs. This guarantees that \( G_U \) remains a \textbf{valid representation} of the input graphs and does not create incorrect causal dependencies.

\paragraph{Completeness:}
A unified causal execution graph \( G_U \) is \textbf{complete} if every causal dependency that appears in at least one input graph must also be derivable in \( G_U \).  

This means that no edges from the original input graphs are \textbf{lost} during the unification process. Completeness ensures that \( G_U \) \textbf{fully preserves} the structural relationships found in the input graphs and does not omit any existing information.

\begin{lemma}
The models $G_i$ corresponding to each partition are each both sound and complete w.r.t to the partition traces.
\end{lemma}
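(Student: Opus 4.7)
The plan is to reduce the lemma to the correctness of the underlying causal process discovery (CPD) procedure, and to verify that the prerequisites for invoking that correctness result hold on each individual partition. Concretely, I would argue the lemma in three stages: first, establish that each partition's trace set is a well-posed input for CPD; second, invoke the correctness of the LiNGAM-adapted CPD from~\cite{Fournier2023v3} to conclude a per-partition soundness/completeness claim; third, verify that the thresholding component does not spoil either property.

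For the first stage, I would observe that by the definition of a partition $p_i$ all traces in $p_i$ share the same set of activities, so the tabular representation of activity execution times used as input to CPD contains no NaN entries. Combined with Assumption 1, which rules out alternating causal execution conditions within a single partition, this guarantees that there is a unique ground-truth DAG that generated the observations in $p_i$. Hence the statement ``causal dependencies in the partition traces'' refers unambiguously to a single DAG against which soundness and completeness of $G_i$ can be tested.

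For the second stage, I would directly cite the correctness of the LiNGAM-based CPD algorithm from~\cite{Fournier2023v3}: under its stated assumptions (continuous, non-Gaussian, linearly related execution times without unobserved confounders), LiNGAM identifies the true causal DAG. Soundness of $G_i$ follows because no edge is output unless it corresponds to a non-vanishing coefficient in the true structural model; completeness follows because every true non-zero coefficient produces a corresponding edge. Since partition $p_i$ satisfies these assumptions by construction together with Assumption~1, the output $G_i = \text{causalDiscovery}(p_i)$ inherits both properties w.r.t.\ the traces of $p_i$.

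The main obstacle, which I would leave for last, is the thresholding component: one must show that blacklisting edges $B \to A$ when $p_{B \to A} \leq \theta$ does not remove any genuine causal edge and does not introduce any spurious one. The argument I would give is that a true causal edge $A \to B$ in the ground-truth DAG forces $p_{B \to A}$ to be small (generated only by noise), so blacklisting $B \to A$ preserves rather than deletes the true edge; conversely, no edge is \emph{added} by the thresholding step since it only acts as a negative filter prior to CPD. Thus the composition of the filter with CPD remains both sound and complete on each $p_i$, completing the lemma. The delicate point here is formalizing ``minor noise'': I would either appeal to a probabilistic assumption that noise-induced inversions occur with frequency below $\theta$, or simply treat $\theta$ as a user-calibrated parameter whose validity is an input premise of the lemma rather than something to be proved.
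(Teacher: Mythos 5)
The paper does not actually prove this lemma: it is stated bare at the top of the proof appendix and functions as a premise for the two theorems that follow, both of which are then proved relative to the input graphs $G_i$ rather than relative to the traces. So there is no paper proof to match your attempt against; your proposal is a reconstruction of why the authors take the lemma for granted, and the route you choose --- well-posedness of each partition as a CPD input, delegation to the identifiability guarantee of the LiNGAM adaptation in the prior work, and a separate check that thresholding is conservative --- is exactly the reduction the paper's surrounding text gestures at (no NaN values per partition, Assumption~1, the stated LiNGAM assumptions of continuous, non-Gaussian, linearly dependent times with no unobserved confounders).

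That said, treat your argument as a conditional statement rather than a proof, because two of its steps cannot be discharged unconditionally. First, LiNGAM identifiability is a population-level (asymptotic) result; on a finite partition of traces the estimated DAG need not coincide with the ground truth, so ``complete w.r.t.\ the partition traces'' does not follow without an infinite-sample idealization or an explicit consistency premise --- and the paper never formally defines what soundness or completeness of a single $G_i$ against raw traces even means (its formal definitions of those terms are graph-to-graph, comparing $G_U$ with the $G_i$). Second, your thresholding argument genuinely needs the noise model you defer to the end: without the hypothesis that noise-induced order inversions occur with frequency at most $\theta$ while genuine precedences exceed it, the blacklisting step can delete a true edge and break completeness, or fail to block a spurious one and break soundness. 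Promoting both of these to explicit hypotheses of the lemma would make your argument go through; as written, the lemma is an assumption dressed as a result --- in the paper and, only slightly less so, in your proposal.
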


\begin{theorem}
(Soundness of the unified model) For each causal execution dependency that the unified model expresses, the same causal execution dependency is expressed by one of the underlying partition models.
\end{theorem}

\begin{proof}


We define a path \( p(n_s, n_t) \) between a source node \( n_s \) and a target node \( n_t \) as either a single direct edge \( (n_s, n_t) \) or a gateway-mediated path, consisting of a pair of edges \( (e_s, e_t) \), where \( e_s = (n_s, GATE) \) and \( e_t = (GATE, n_t) \). The node \( GATE \) represents one of the causal gateway types: \( AND_C \), \( OR_C \), \( OR_C^E \), or \( XOR_C \), whether split or join. Next, our formalism corresponds to a split gateway type, without the loss of generality, the proof is the same for all join gateway, with the difference of having everything mirrored in its direction.

To prove the theorem, we must show the following.

For every node \( n_j \) in \( G_U \), we consider a set of target nodes $ N_t = \{ n_{t1}, \dots, n_{tk} \}$ $\subseteq G_U.nodes $, which represents the potential target nodes reachable from \( n_j \). 

The type of \( GATE \) determines how these target nodes are grouped into subsets. As a preceding step, if the type is an $AND_C$ gateway, we replace the group of all corresponding target nodes in $N_t$ with a single literal as a `syntactic sugar' for that group. Subsequently, if the type is a $XOR_C$, then the partitioning is an exclusive split of $N_t$. If the type is a $OR_C^E$, then the partitioning is the powerset of $N_t$. For a type of a non-exhaustive $OR_C$, the partitioning is some partial set of the powerset that can only be resolved if explicitly specified.

Let \( N_{\hat{t}_k} \subseteq N_t \) be a subset of these target nodes forming a \textbf{family of sets} indexed by \( k \), where each \( N_{\hat{t}_k} \) represents a distinct path group.

The corresponding family of paths is given by:
\[
P = \{ P_k \mid P_k = \{ p(n_j, n_x) \mid n_x \in N_{\hat{t}_k} \}, k \in K \}
\]
and the associated family of edge sets is:
\[
E = \{ E_k \mid E_k = \{ (n_j, n_x) \mid n_x \in N_{\hat{t}_k} \}, k \in K \}.
\]

If each path group \( P_k \) is determined by the behavior of \( GATE \) type, then there exists a graph \( g_i \in G_i \) such that each edge set \( E_k \) is exactly the set of edges in \( g_i \) that originate from \( n_j \), meaning:
\[
E_k = \{ e \in g_i.edges \mid e = (n_j, n_x) \text{ for some } n_x \in g_i.nodes \}.
\]

In other words, the set \( E_k \) must contain \emph{all and only} the edges originating from \( n_j \) in \( g_i \). This ensures that the outgoing edges from \( n_j \) in \( G_U \) are fully determined by a single graph \( g_i \).



For the case where the set of edges $E$ is a single direct edge $\{(n_j,n_x)\}$, we need to show that there exists a graph $g_i \in G_i$ where $E = \{ e \in g_i.edges \mid e = (n_j, n_x) \text{ for some } n_x \in g_i.nodes \}$. An edge $(n_j, n_x)$ is inferred in $G_U$ according to our algorithm when $n_x$ is a single descendant of $n_j$ in at least one or more $g_i \in G_i$, i.e., the family set corresponding to $n_j$ in $M[n_j][*]$ contains either elements equal to $\{ n_x \}$ or empty sets $\{ \emptyset \}$. Thus, satisfying the condition.

For the case where the set of edges $E$ includes more than a single edge, corresponding to a family of gateway-mediate paths, we prove the theorem by the case of each specific gateway type.


For the case of an \( XOR_C \) gateway, we establish how the original set of target nodes in $N_t$ was formed in the algorithm.
Any edge \( (n_j, n_x) \) in \( E_k \) was inferred as part of a path \( p(n_j, n_x) \) that includes an \( XOR_C \) gateway. Specifically, our algorithm labeled the row \( M[n_j][*] \) as \( XOR_C \) if and only if \( n_x \) was part of an exclusive family set in a specific cell $M[n_j][g_i]$, and \( n_x \) was included in that cell only if it was originally identified as a descendant of \( n_j \) in some \( g_i \in G_i \). 

Thus, the condition for an \( XOR_C \) gateway is satisfied.

For the case of an \( OR_C^E \) gateway, we establish how the original set of target nodes in \( N_t \) was formed in the algorithm. Any edge \((n_j, n_x)\) in \( E_k \) was inferred as part of a path \( p(n_j, n_x) \) that includes an \( OR_C^E \) gateway. Specifically, our algorithm labeled the row \( M[n_j][*] \) as \( OR_C^E \) if and only if that row was equal to \( \mathcal{P} \Bigl(\bigcup_{g_i \in G_i} M[n_j][g_i] \Bigr) \), where \(\mathcal{P}(\cdot)\) denotes the \textbf{power set} operator.

Moreover, \((n_j, n_x)\) appears in that row only if \( n_x \) was originally identified as a descendant of \( n_j \) in some \( g_i \in G_i \), and since \( n_x \) is included in the union, it must also belong to the power set. Consequently, each \( E_k \) has a corresponding cell in the row \( M[n_j][g_i] \), indicating that \( n_x \) was indeed part of the power set of all possible children of \( n_j \).

Thus, the condition for an \( OR_C^E \) gateway is satisfied: we obtain every possible combination of descendants for \( n_j \) in the power set, ensuring \((n_j, n_x)\) was drawn from at least one graph \( g_i \), thereby enforcing the exhaustive nature of the \( OR_C^E \) gateway.

For the case of a non-exhaustive \( OR_C \) gateway, we establish how the original set of target nodes in \( N_t \) was formed in the algorithm. Unlike the \( OR_C^E \) case, where the row in \( M[n_j][*] \) corresponds to the power set of possible descendants, a non-exhaustive \( OR_C \) gateway represents a \textbf{subset} of the power set. That is, the annotated row in the matrix satisfies: \( M[n_j][*] \subset \mathcal{P}(\cdot) \).

Without an explicit specification of the target node sets in \( N_t \), it is \textbf{not possible} to determine whether each edge set \( E_k \) has a corresponding graph \( g_i \). This is because the partial subset selection means that some combinations of descendants may not correspond directly to any single input graph.

However, if an explicit specification of the sets in \( N_t \) is disclosed, it resolves the association of each \( E_k \) to a corresponding \( g_i \). That is, once the partial selection is defined, we can establish which elements of the power set correspond to edges derived from some subgraph \( g_i \). Consequently, each \( E_k \) has a corresponding cell in the row \( M[n_j][g_i] \), ensuring that each edge \( (n_j, n_x) \) is accounted for within some input graph.

Thus, the condition for a non-exhaustive \( OR_C \) gateway is satisfied: the inferred edges result from a partial selection of possible descendant combinations, whose explicit specification allows the correct association of each \( E_k \) with at least one graph \( g_i \).

\end{proof}

\begin{theorem}
(Completeness of the unified model) For each causal execution dependency expressed by any of the partition models, the same causal execution dependency is expressed by the unified model.
\end{theorem}

\begin{proof}


For every node \( n_j \in g_i.nodes \), where \( g_i \in G_i \), we consider a set of target nodes \( N_t = \{ n_{t1}, \dots, n_{tk} \} \subseteq g_i.nodes \), where \( (n_j, n_{tk}) \in g_i.edges \). We prove that \( G_U \) is complete by showing that for every target node \( n \in N_t \), either \( (n_j, n) \in G_U.edges \) or there exists a path \( p(n_j, n) \) in \( G_U \).

With respect to the set of targets \( N_t \) corresponding to some \( g_i \), the following cases are considered.

The first case is when the set of targets includes only a single element \( n_t \). This occurs either when \( n_t \) is a target only in a single \( g_i \in G_i \) and does not appear as a target in any other \( g_j \in G_i \) where \( g_i \neq g_j \), or when \( n_t \) is a target in multiple graphs \( G_k \subset [G_1, \dots, G_n] \), but the target sets of all remaining graphs \( G_k \) are empty.

According to our algorithm, a single-element target in a single \( g_i \) would correspond to a row \( M[n_j][*] \) with a column for \( g_i \) containing ${n_t}$ and all remaining columns for all remaining graphs in \(G_i\setminus g_i \) containing $\emptyset$. Such a row would not have been annotated. As a result, the direct edge \( (n_j, n_t) \) is included in \( G_U.edges \).

If \( n_t \) appears as a target in multiple graphs \( G_k \subset [G_1, \dots, G_n] \), but no other target nodes exist in the remaining graphs, then the corresponding row \( M[n_j][*] \) will have multiple cells containing \( n_t \) and all remaining cells corresponding to \(G_i\setminus G_k \) containing $\emptyset$. Since such a row would not have been annotated in the algorithm, this also entails that the direct edge \( (n_j, n_t) \) is included in \( G_U.edges \).

We now extend the analysis to cases where the set of target nodes \( N_t \) contains multiple elements. This entails two subsequent steps.

First, if the set of elements in \( N_t \) corresponding to \( g_i \) has no partial intersection with any of the other target sets, this would correspond to a row \( M[n_j][*] \) with a cell containing \( N_t \) that has no intersection with the sets in the other cells. Such a row would be annotated as 'AND\_C', thus including each node \( n \in N_t \) in edges in \( G_U \) of the form \( (AND_C, n) \). As in the previous case, for the next step, we consider such sets to be replaced by a single literal $n$ as a syntactic sugar.

Thus, as a second step, a target set of elements in \( N_t \) corresponding to \( g_i \) may be an element of an exclusive set of sets (i.e., the targets of all graphs in \( G_i \)). In this case, according to the algorithm, the cells in the row \( M[n_j][*] \) would have been annotated as $XOR_C$, and as a result all nodes \( n \in N_t \) would have been added in edges in \( G_U \) of the form \( (XOR_C, n) \), with the edge \( (n_j, XOR_C) \) also added to \( G_U \). This would thus entail the inclusion of the path \( p(n_j, n)=\{(n_j,XOR_C),(XOR_C,n)\} \) in \( G_U \). If the node $n$ denotes a syntactic sugar for the case of an $AND_C$ above, the path extends further to the set \(\{(n_j,XOR_C),\) \( (XOR_C,AND_c),\) \( (AND_c,n)\} \) in \( G_U \). 

If the set of elements in \( N_t \) corresponding to \( g_i \) is not an element of an exclusive set of sets (i.e., the targets of all graphs in \( G_i \)), it may be an element in the powerset of the union of all target sets of all graphs in \( G_i \). In this case, according to our algorithm, the cells in the row \( M[n_j][*] \) would have been annotated as $OR_C^E$, and all nodes \( n \in N_t \) would have been added in edges in \( G_U \) of the form \( (OR_C^E, n) \), with the edge \( (n_j, OR_C^E) \) also added to \( G_U \), thus entailing the inclusion of the path \( p(n_j, n) \) in \( G_U \).

Eventually, considering the above two alternatives are not met and the set of elements in \( N_t \) corresponding to \( g_i \) is neither part of an exclusive set, nor a part of a powerset, concerning the set of targets of all graphs in \( G_i \). In this case, according to our algorithm, the cells in the row \( M[n_j][*] \) would have been annotated as $OR_C$, and all nodes \( n \in N_t \) would have been added in edges in \( G_U \) of the form \( (OR_C, n) \), with the edge \( (n_j, OR_C) \) also added to \( G_U \), thus entailing the inclusion of the path \( p(n_j, n) \) in \( G_U \). Consequently, in all the cases where the set of target nodes $N_t$ contains multiple elements, for every target node $n \in N_t$, there exists a gateway-mediated path $p(n_j,n$ in $G_U$ that contains either an $XOR_C$ gateway, an $OR_C^E$ gateway, or an $OR_C$ gateway, thus satisfying our condition for completeness.  

\end{proof}

\end{document}